\algrenewcommand\algorithmicindent{1.2em} 
\definecolor{mydarkblue}{rgb}{0,0.08,0.45}
\tikzset{
  >=Latex, initial text=, 
  every state/.style={minimum size=6.5mm, inner sep=0pt, line width=.6pt},
  every loop/.style={looseness=6},
  orig/.style={state, fill=green!18},
  added/.style={state, fill=blue!18},
  dead/.style={state, draw=black!70, fill=black!7}
}
\definecolor{burntorange}{rgb}{0.8,0.33,0.0}
\theoremstyle{definition}
\theoremstyle{remark}
\theoremstyle{plain}
\newtheorem{theorem}{Theorem}[section]
\newtheorem{corollary}{Corollary}[theorem]
\newtheorem{lemma}[theorem]{Lemma}
\definecolor{xdarkblue}{rgb}{0,0.08,0.45}
\definecolor{sbblue}{HTML}{023a75}
\definecolor{tdred}{HTML}{cf0202}
\def\eqref#1{(\ref{#1})}
\def\1{\bm{1}}
\DeclareMathAlphabet{\mathsfit}{\encodingdefault}{\sfdefault}{m}{sl}
\SetMathAlphabet{\mathsfit}{bold}{\encodingdefault}{\sfdefault}{bx}{n}
\newcommand{\E}{\mathop{\mathbb{E}}}
\newcommand{\ham}{\{0,1\}}
\newcommand{\DFA}{\mathrm{DFA}}
\newcommand{\DFAn}{\mathrm{DFA}_n}
\newcommand{\BDFA}{\mathrm{ADFA}}
\newcommand{\A}{\mathcal{A}}
\newcommand{\Astar}{A^{\star}}
\newcommand{\fstar}{f_{\Astar}}
\newcommand{\fhat}{\hat{f}}
\newcommand{\nsp}{\mathrm{NSP}}
\newcommand{\deadstate}{q_\mathrm{dead}}
\newcommand{\dfunc}{\varphi}
\newcommand{\alphasize}{|\Sigma|}
\newcommand{\calD}{\mathcal{D}}
\newcommand{\calF}{\mathcal{F}}
\newcommand{\calA}{\mathcal{A}}
\newcommand{\err}{\mathrm{err}}
\newcommand{\errnsp}{\mathcal{L}_{\mathrm{NSP}}}
\newcommand{\pref}{x_{:n}}
\newcommand{\prefl}[1]{x_{:{#1}}}
\newcommand{\sym}{\sigma}
\newcommand{\final}{F}
\title{Hardness of Learning Regular Languages in \\ the Next Symbol Prediction Setting }
\author{Satwik Bhattamishra$^1$\thanks{Corresponding author. Contact: \href{mailto:satwik.bmishra@cs.ox.ac.uk}{\texttt{\footnotesize satwik.bmishra@cs.ox.ac.uk}}} \qquad Phil Blunsom$^{2}$ \qquad Varun Kanade$^1$ \\
\vspace{-2mm} \\
  \normalsize{$^1$University of Oxford\quad  \quad $^2$Cohere} \\
    } 
\date{}
\begin{document}

\maketitle

\begin{abstract}
We study the learnability of languages in the \emph{Next Symbol Prediction} (NSP) setting, where a learner receives only positive examples from a language together with, for every prefix, (i) whether the prefix itself is in the language and (ii) which next symbols can lead to an accepting string. This setting has been used in prior works to empirically analyze neural sequence models, and additionally, we observe that efficient algorithms for the NSP setting can be used to learn the (truncated) support of language models. We formalize the setting so as to make it amenable to PAC-learning analysis. While the setting provides a much richer set of labels than the conventional classification setting, we show that learning concept classes such as DFAs and Boolean formulas remains computationally hard. The proof is via a 
construction that makes almost all additional labels uninformative, yielding a reduction from the conventional learning problem to learning with NSP labels. Under cryptographic assumptions, the reduction implies that the problem of learning DFAs is computationally hard in the NSP setting.

\end{abstract}

\section{Introduction}\label{sec:intro}

We formalize and study the problem of learnability of languages in the \textit{Next Symbol Prediction} ($\nsp$) setting, in which a learner receives only \emph{positive} strings from a target language, together with rich supervision for every prefix: a membership bit indicating whether the prefix itself is in the language and a vector of ``continuation'' bits indicating which next symbols admit some accepting continuation. A prediction is correct only if the hypothesis matches \emph{all} membership and continuation labels at \emph{every} prefix of the example.

This setup has a natural interpretation in the context of language models: when decoding with top-$p$~\citep{holtzman2019curious}, top-$k$, or min-$p$~\citep{minh2025turning} sampling, the per-prefix continuation set is precisely the set of admissible next tokens, and termination corresponds to allowing the end-of-sequence token. In particular, positive-only $\nsp$ supervision is naturally obtained from black-box models and avoids the need for inventing artificial distributions over negative strings; at the same time, the requirement for correct predictions at every prefix makes the task challenging. 

Additionally, while $\nsp$ has been widely used to \emph{evaluate} sequence models on formal-language benchmarks (see e.g. \citet{gers2001lstm,suzgun2019lstm,ebrahimi2020can,bhattamishra-etal-2020-ability} and references therein), the learnability of languages under $\nsp$ and its relationship to conventional binary classification has not been established. 

\textbf{Our contributions.} We give a PAC-style formulation of learning using $\nsp$ labels and analyze the learnability of classes such as DFAs and Boolean formulas. We prove that $\nsp$ supervision does \emph{not} remove the key \emph{computational barriers} for learning regular languages. The key technical argument is a construction that renders all but one continuation label uninformative, allowing a reduction to well-known hardness results \citep{kearns1994cryptographic}. Thus, even with the richer labels, efficient (improper) learning remains computationally intractable (under standard cryptographic assumptions). Our results suggest that while $\nsp$ labels may offer some benefit, they do not, in general, circumvent computational hardness.\footnote{In an \href{https://openreview.net/forum?id=3vkeYFEZxW}{earlier workshop paper} \citep{bhattamishra2023the}, we argued that polynomial-time learning is feasible in the NSP setting. We later found that the claim was \textit{incorrect}; in this work, we prove that learning in the NSP setting is computationally hard.}

\section{Problem Definition}\label{sec:def}

\textbf{Notation.}
A deterministic finite automaton ($\DFA$) is a tuple $A=(Q,\Sigma,\delta,q_0,F_A)$ with finite state set $Q$, alphabet $\Sigma$, transition function $\delta:Q\times\Sigma\to Q$, start state $q_0$, and a subset of accepting states $F_A\subseteq Q$. The language of $A$ is $L_A\subseteq\Sigma^{*}$; write $A(x)=L_A(x)\in\{0,1\}$ and $|A|$ denotes the number of states $|Q|$. Fix an order $\Sigma=\{\sigma_1,\ldots,\sigma_{\alphasize}\}$. For $x=w_1\cdots w_N$, let the length-$n$ prefix be $\pref:=w_1\cdots w_n$ for $0\le n\le N$. We use $\deadstate$ for a dead state with $\delta(\deadstate,\sigma)=\deadstate$ for all $\sigma\in\Sigma$; if present, such a state is unique in a minimal DFA. We use $\DFAn$ to denote the class of DFAs with at most $n$ states.

\subsection{Next Symbol Prediction (NSP) Setting}\label{subsec:nspdef}


For a language $L$, for any string $x$ and any $\sigma \in \Sigma$, we define $\dfunc_L(x, \sigma)$ as follows: 

\[
\dfunc_L(x,\sigma):=
\begin{cases}
1, & \text{if }\,\,\exists \, s \in \Sigma^* \,\text{ such that } \, x\cdot\sigma\cdot s \in L\\
0, & \text{otherwise}.
\end{cases}
\]

When the language $L$ is regular, then it is equivalent to computing whether $x \cdot \sigma$ leads to a dead state in its canonical DFA. For any minimal DFA, let $q= \delta(q_0, x)$, then $\dfunc(x, \sigma) = \dfunc(q, \sigma) = 0$ if $\delta(q, \sigma) = \deadstate$ and $1$ otherwise. The continuation vector at $q$ is defined as $\dfunc(q)=[\dfunc(q,\sigma_1),\ldots,\dfunc(q,\sigma_{|\Sigma|})]\in\{0,1\}^{|\Sigma|}$, and for $x\in\Sigma^{*}$ we use $\dfunc(x):=\dfunc(\delta(q_0,x))$ and $\dfunc(x,\sigma):=\dfunc(\delta(q_0,x),\sigma)$. 


Fix a target language $L\subseteq\Sigma^{*}$. An example consists of a positive string $x=w_1\cdots w_N\in L$ together with labels for each prefix $\pref$ ($0\le n\le N$): its membership label $L(\pref)\in\{0,1\}$ and the continuation labels $(\dfunc(\pref,\sigma))_{\sigma\in\Sigma}\in\{0,1\}^{|\Sigma|}$, where $\dfunc(\pref,\sigma)=1$ means that $\pref\cdot\sigma$ has some accepting continuation in $L$ and $\dfunc(\pref,\sigma)=0$ implies that $\pref \cdot \sigma \cdot s \notin L$ for every suffix $s$. Collecting these across all prefixes gives
\[
f_{L}(x)\ :=\ \big( \, (\dfunc(\pref,\sigma_i))_{i=1}^{|\Sigma|}, \, L(\pref) \, \big)_{n=0}^{N}
\ \in\ \{0,1\}^{(|\Sigma|+1)(N+1)},
\]
with coordinates ordered by nondecreasing prefix length $n$.

\textbf{Predictors and loss.}
Although the $\nsp$ setting is not restricted to regular languages, it is more intuitive to initially think of the setting in terms of DFAs. We instantiate predictors via automata. For a DFA $A$, define
\[
f_A(x)\ :=\ \big( \, (\dfunc(\pref,\sigma_i))_{i=1}^{|\Sigma|}, \, L_A(\pref) \, \big)_{n=0}^{|x|}, \qquad
\forall x \in \Sigma^*, \quad f_A(x) \in \{0,1\}^{(|\Sigma|+1)(|x|+1)} .
\]
Let $\Astar$ denote an unknown target DFA. On a single example $x$, the error is defined as,
\[
\err\big(f_A(x),\fstar(x)\big)\ :=\ \mathbb{I} [f_A(x) \neq \fstar(x)].
\]
Note that for a given input string $x$, the error $\err(f_A(x),\fstar(x))=0$ precisely when all $(|\Sigma|+1)(|x|+1)$ labels are correct. For a distribution $\calD$ supported on positive examples $L$, the expected $\nsp$ error is
\[
\errnsp(f_A;\fstar,\calD)\ :=\ \E_{x\sim \calD}\big[\, \err\big(f_A(x),\fstar(x)\big) \,\big].
\]

\textbf{Remark.}  Note that, although the $\nsp$ setting only involves positive examples, some information about the negative examples can be obtained from the membership and continuation labels of the prefixes. Since a predictor needs to predict all $(|\Sigma|+1)(|x|+1)$ correctly for an example $x$, the setting is more stringent in the sense that a random predictor will typically have near-zero accuracy as opposed to 50\% accuracy in the conventional classification setting. We will refer to labels described above of the form $f_A(x) \in \{0, 1\}^{(|\Sigma|+1)(|x|+1)}$ as the $\nsp$ labels. We refer to the setting with binary classifiers $f(x) \in \ham$ as the conventional classification setting.

\section{Hardness of Learning}\label{sec:nsphard}

 We study efficient PAC learnability in the $\nsp$ setting. An algorithm $\calA$ is an efficient PAC learner for a class $\calF$ if for every $f\in\calF$ and every distribution $\calD$ supported on positive examples, $\calA$ runs in polynomial time on $\nsp$–labeled inputs and outputs $\hat f$ such that, with probability at least $1-\delta$, $\errnsp(\hat f;f,\calD)\le\epsilon$.

 Note that the continuation labels can be highly informative for certain classes. Consider \emph{Conjunctions}: Boolean monomials $f:\{0,1\}^N\to\{0,1\}$, e.g., $f(z)=z_2\wedge \bar z_4$. In the conventional classification model, learning Conjunctions is known to require $\Theta(N)$ labeled examples. In the $\nsp$ model, one positive example $x\in f^{-1}(1)$ suffices. For each prefix $x_{:k}$, the label $\dfunc(x_{:k},0)$ equals $0$ if and only if the literal $z_{k+1}$ appears in the target conjunction; likewise, $\dfunc(x_{:k},1)=0$ if and only if the literal $\bar z_{k+1}$ appears. Thus, by reading the continuation labels across the $N$ prefixes, the learner recovers exactly which literals are present and hence identifies the target monomial from a single positive $\nsp$ example. While the $\nsp$ labels may provide a statistical advantage for some classes, we show that in the general case, they are not enough to remove the computational barriers for learning DFAs.

We relate learnability in the $\nsp$ setting to standard PAC learning for acyclic DFAs that accept only strings of a fixed length.  Throughout this section, we work over the binary alphabet $\Sigma=\{0,1\}$.

\textbf{Notation for classes.}
For $N\in\mathbb{N}$ and a polynomial $p(\cdot)$, define
\[
\mathrm{ADFA}^{N}_{p(N)}\ :=\ \big\{A:\ A \text{ is a DFA with at most } p(N) \text{ states and } L_A\subseteq\{0,1\}^{N}\big\},
\]
and write $\mathrm{ADFA}_{p(\cdot)}:=\bigcup_{N\ge 1}\mathrm{ADFA}^{N}_{p(N)}$.  
As above, $\DFAn$ denotes the class of DFAs with at most $n$ states. Under the convention that the transition function is total, any minimal $A\in\mathrm{ADFA}^{N}_{p(N)}$ has a dead (sink) state with self-loops on both input symbols. If instead one allows a partial transition function $\delta$, interpreting undefined transitions as implicit moves to a dead state, then the automata in $\mathrm{ADFA}^{N}_{p(N)}$ are acyclic in the usual sense.

\textbf{Background.}
\citet{kearns1994cryptographic} show that, for a suitable polynomial $p$, weak PAC learning of $\mathrm{ADFA}_{p(\cdot)}$ in the conventional classification (binary-label) model is as hard as inverting certain cryptographic functions.

\begin{theorem}[\citet{kearns1994cryptographic}]\label{th:kearns}
There exists a polynomial $p(\cdot)$ such that the problems of inverting RSA, factoring Blum integers, etc., are probabilistic polynomial-time reducible to weakly learning $\mathrm{ADFA}_{p(\cdot)}$ in the standard PAC setting.
\end{theorem}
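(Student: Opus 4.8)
The plan is to give, for a suitable polynomial $p$, a probabilistic polynomial-time reduction that turns any efficient weak learner for $\mathrm{ADFA}_{p(\cdot)}$ into an algorithm that inverts the assumed trapdoor one-way functions; weak learnability would then contradict the cryptographic assumptions. I would anchor the reduction on a family of trapdoor permutations $f$ (RSA, i.e.\ $x\mapsto x^{e}\bmod N$, or squaring modulo a Blum integer) together with a hard-core predicate $b(\cdot)$ — for instance the least-significant bit, whose hardness for RSA/Rabin is the Alexi--Chor--Goldreich--Schnorr theorem. The crucial point is that the reduction can manufacture labeled examples using \emph{only the public key}: it draws a random preimage $x$, computes the image $y=f(x)$ by forward evaluation (no trapdoor needed), and attaches the binary label $b(x)$. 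Since $f$ is a permutation, these are exactly standard PAC examples $(y,c(y))$ of the concept $c(y):=b(f^{-1}(y))$, drawn under the uniform distribution on images; note that $c$ has the secret trapdoor hard-wired into its definition, yet the reduction never needs to know it.

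The crux is the \emph{representation} step: I must show that the predicate $c(y)=b(f^{-1}(y))$ is computed by some automaton in $\mathrm{ADFA}_{p(\cdot)}$, i.e.\ by a polynomial-size acyclic DFA on fixed-length strings. The key structural observation is that an acyclic, layered DFA that reads a length-$T$ string left to right is precisely an oblivious read-once branching program whose width is the per-layer state count. If the input string is taken to be a \emph{redundant encoding} that repeats each logical input bit as many times as a computation requires, then a width-$O(1)$ left-to-right automaton can simulate a bounded-width branching program that reads its variables many times. By Barrington's theorem such programs compute all of $\mathsf{NC}^{1}$, and the relevant cryptographic predicate — built from modular arithmetic with a fixed modulus and (secret) exponent — is computable by polynomial-size Boolean formulas, hence in $\mathsf{NC}^{1}$. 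Fixing this redundant encoding therefore realizes $c$ as a leveled DFA of size $\mathrm{poly}(T)$, which pins down the polynomial $p$ and places $c\in\mathrm{ADFA}_{p(\cdot)}$.

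Given the representation, I would close the loop with a prediction-preserving argument. Running the assumed weak learner on the push-forward of the example distribution through the encoding yields a hypothesis $h$ whose error on $\mathrm{ADFA}_{p(\cdot)}$ is at most $1/2-1/\mathrm{poly}(T)$; pulling back through the encoding, $h$ predicts $b(f^{-1}(y))$ on a fresh random image $y$ with advantage $1/\mathrm{poly}$ over random guessing. This is precisely a predictor for the hard-core bit $b(x)$ from $f(x)$, contradicting the hard-core property of $b$ and hence the one-wayness of $f$ — i.e.\ it yields a probabilistic polynomial-time algorithm with non-negligible advantage at inverting RSA or factoring Blum integers. Chaining the reductions establishes the stated reducibility, and the fact that only \emph{weak} learning is assumed is exactly what makes the hardness strong.

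I expect the representation step to be the main obstacle, on two fronts. First, one must carefully certify that the specific hard-core predicate lies in the low complexity class the automata can capture: verifying that modular exponentiation with a fixed exponent and modulus admits polynomial-size formulas (equivalently $\mathsf{NC}^{1}$ circuits, via results of Beame--Cook--Hoover and Barrington) is nontrivial, and one must check that the redundant-encoding simulation keeps the resulting DFA both acyclic and of polynomial size. Second, the reduction must be genuinely prediction-preserving: the learner sees the distribution on \emph{encoded} strings, so one must argue that $1/\mathrm{poly}$ advantage there transfers without loss to $1/\mathrm{poly}$ advantage on the hard-core bit, and that strings which are not valid encodings — on which the automaton may behave arbitrarily — carry no probability mass. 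Getting this measure-preservation and advantage bookkeeping right is where the real work lies; the remaining cryptographic steps are then standard.
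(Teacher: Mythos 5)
This statement is background imported from \citet{kearns1994cryptographic}; the paper gives no proof of it, so the comparison must be against the original Kearns--Valiant argument. Your outline reconstructs that argument essentially faithfully: hard-core bits of trapdoor permutations (lsb of RSA via Alexi--Chor--Goldreich--Schnorr, squaring modulo Blum integers), example generation using only forward evaluation with the public key, the representation step via a redundant input encoding (including the precomputed powers $y^{2^i}\bmod N$, so that the secret-dependent concept reduces to an iterated product in $\mathsf{NC}^1$ by Beame--Cook--Hoover), Barrington's theorem to pass from $\mathsf{NC}^1$ to constant-width oblivious branching programs that become polynomial-size acyclic DFAs on the repeated-bit encoding, and finally a prediction-preserving reduction showing that any $1/\mathrm{poly}$ weak-learning advantage breaks the hard-core property. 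The caveats you flag (certifying the $\mathsf{NC}^1$ bound, and checking that only valid encodings carry probability mass, which is harmless since PAC learners must work under every distribution) are exactly the technical points the original proof handles; your plan is the same route, not a new one.
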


We prove that an efficient PAC algorithm for $\mathrm{ADFA}^{N}_{p(N)}$ in the $\nsp$ setting would yield an efficient PAC algorithm for $\mathrm{ADFA}^{N}_{p(N)}$ in the conventional classification setting.  Together with Theorem~\ref{th:kearns}, this implies cryptographic hardness for $\nsp$ learning of these Boolean Acyclic DFAs and consequently the general class of DFAs.

\subsection{Boolean Acyclic DFAs}\label{subsec:bdfa-construct}

Fix $N\ge 1$ and a target DFA $A=(Q,\Sigma,\delta,q_0,\final)$ with $L_A\subseteq\{0,1\}^{N}$.  We first record a basic structural property of \emph{minimal} DFAs for fixed-length languages.

\begin{lemma}[Unique depth in minimal \texorpdfstring{$\mathrm{ADFA}^{N}_{p(N)}$}{ADFA}]\label{lem:depth}
If $A$ is minimal and $L_A\subseteq\{0,1\}^{N}$, then every state $q\in Q\setminus\{\deadstate\}$ is reachable by strings of exactly one length $\ell(q)\in\{0,1,\ldots,N\}$.  Consequently, every transition increases depth by one: if $\delta(q,\sigma)=q'$ with $q\neq \deadstate$ and $q'\neq \deadstate$, then $\ell(q')=\ell(q)+1$.  Acceptance occurs only at depth $N$.
\end{lemma}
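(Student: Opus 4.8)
The plan is to exploit the Myhill--Nerode structure of a minimal DFA, using the fact that in a fixed-length language every accepted string has length exactly $N$, which pins down the ``depth'' of any state that can still reach an accepting state. First I would record two consequences of minimality: every state is reachable from $q_0$, and every non-dead state $q$ has a nonempty right language $L_q:=\{s\in\Sigma^*:\delta(q,s)\in\final\}$. The latter holds because a reachable state with empty right language is indistinguishable from $\deadstate$ and would be merged with it in the minimal automaton; hence $\deadstate$ is the unique state with $L_q=\emptyset$, and every $q\in Q\setminus\{\deadstate\}$ admits some accepting continuation.

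For the first claim I would fix a non-dead state $q$ and let $u,v$ be any two strings with $\delta(q_0,u)=\delta(q_0,v)=q$. Picking any $s\in L_q$ (which exists by the previous paragraph), both $us$ and $vs$ are accepted, hence lie in $\{0,1\}^{N}$; comparing lengths gives $|u|+|s|=N=|v|+|s|$, so $|u|=|v|$. Therefore all strings reaching $q$ share a common length, which I define to be $\ell(q)\in\{0,\ldots,N\}$.

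The ``consequently'' statement is then immediate: if $\delta(q,\sigma)=q'$ with $q,q'\neq\deadstate$, take $u$ with $\delta(q_0,u)=q$ and $|u|=\ell(q)$; then $u\sigma$ reaches $q'$ and has length $\ell(q)+1$, and since $\ell(q')$ is well-defined by the first part it must equal $\ell(q)+1$. Finally, for the acceptance claim, if $q\in\final$ then any $u$ reaching $q$ already satisfies $\delta(q_0,u)=q\in\final$, so $u\in L_A\subseteq\{0,1\}^{N}$, forcing $|u|=N$, i.e.\ $\ell(q)=N$; equivalently $\epsilon\in L_q$ and the length identity gives $\ell(q)=N-|\epsilon|=N$.

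The argument is essentially self-contained once the right-language observation is in place, and it is almost entirely length counting. The one point requiring care---and the only place minimality is genuinely used---is justifying that a non-dead state actually admits an accepting continuation, so that $L_q\neq\emptyset$ and $\ell(q)$ is defined; this is exactly the uniqueness of the empty-right-language state. Everything else follows by extending a string that reaches $q$ to an accepted string of length $N$ and reading off the length.
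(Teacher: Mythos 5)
Your proof is correct and follows essentially the same route as the paper's: both hinge on the observation that minimality forces every non-dead state to have a nonempty residual language $\{s : \delta(q,s)\in \final\}$, and then use the fixed-length constraint $L_A\subseteq\{0,1\}^N$ to pin down the length of any string reaching $q$. The only cosmetic difference is that you argue directly (picking a witness $s$ and counting lengths) where the paper phrases the same length-counting as a contradiction.
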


\begin{proof}
Let $q\in Q\setminus\{\deadstate\}$ be reachable both by a string of length $\ell$ and by a string of length $\ell'$ with $\ell<\ell'$.  The residual language at $q$ is $R(q)\ :=\ \{\,s\in \Sigma^{*}: \delta(q,s)\in F\,\}$.
If $q$ is reached after $\ell$ symbols, then every $s\in R(q)$ must have length exactly $N-\ell$; if $q$ is reached after $\ell'$ symbols, then every $s\in R(q)$ must have length exactly $N-\ell'$.  Since $N-\ell\neq N-\ell'$, these sets are disjoint; hence $R(q)$ must be empty, contradicting $q\neq\deadstate$ in a minimal DFA.  Thus, each non-dead state has a unique depth $\ell(q)$.  Any transition consumes one symbol, so $\ell(q')\le \ell(q)+1$; equality must hold by uniqueness of depth.  Finally, if $q\in F$ had $\ell(q)\neq N$, then $L_A$ would contain strings of length other than $N$, contrary to the assumption.
\end{proof}

By Lemma~\ref{lem:depth}, we may index non-dead states by their unique depth $\ell(q)\in\{0,\ldots,N\}$ (the dead state has no depth).

\textbf{Padding by one bit.}
We construct from $A$ a DFA $A^{\oplus}$ over length-$N{+}1$ inputs whose $\nsp$ labels are uninformative before depth $N$, while at depth $N$ the continuation bit for symbol $0$ recovers $A$’s label. 

More precisely, we will construct a DFA $A^{\oplus}$ with at most $|A| + N + 1$ states such that: (i) the $\nsp$ labels of any prefix of length $<N$ will be $(1, 1, 0)$, and (ii) crucially, the $\nsp$ label for a string $u \in \Sigma^N$ of length $N$ will be $(A(u), 1, 0)$. The ADFA $A^{\oplus}$ accepts all strings $x$ of length $N+1$ ending with $1$ and some ending with $0$ depending on $A(x_{:N})$. Thus, with respect to $A^{\oplus}$, predicting the first continuation bit (corresponding to symbol $0$) is equivalent to predicting membership in $L_A$.

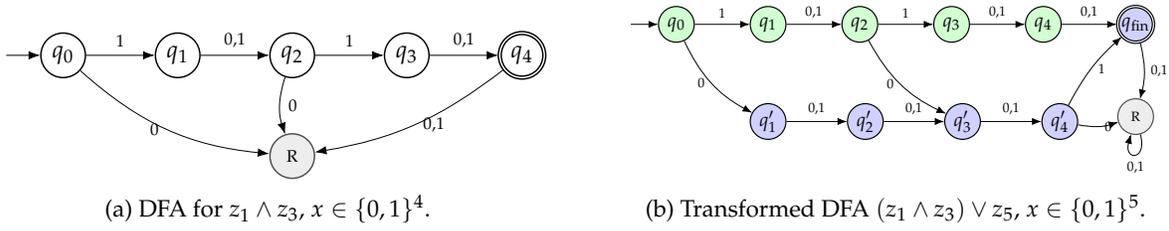
\begin{figure}[t]
\centering
\captionsetup[subfigure]{justification=centering}

\begin{subfigure}[t]{0.47\linewidth}
\centering
\resizebox{\linewidth}{!}{%
\begin{tikzpicture}[node distance=10mm]
  \node[state, initial] (s0) {$q_0$};
  \node[state, right=of s0] (s1) {$q_1$};
  \node[state, right=of s1] (s2) {$q_2$};
  \node[state, right=of s2] (s3) {$q_3$};
  \node[state, accepting, right=of s3] (sf) {$q_4$};
  \node[dead, below=8mm of s2] (sd) {\scriptsize R};

  \path[->]
    (s0) edge node[above, font=\scriptsize] {1} (s1)
    (s1) edge node[above, font=\scriptsize] {0,1} (s2)
    (s2) edge node[above, font=\scriptsize] {1} (s3)
    (s3) edge node[above, font=\scriptsize] {0,1} (sf);

  \path[->]
    (s0) edge[bend right=18] node[left, font=\scriptsize] {0} (sd)
    (s2) edge[bend right=18] node[right, font=\scriptsize] {0} (sd)
    (sf) edge[bend left=14] node[right, font=\scriptsize] {0,1} (sd);
    (sd) edge[loop below] node[font=\scriptsize] {0,1} (sd);
\end{tikzpicture}%
}
\caption{DFA for $z_1\wedge z_3$, $x\in\{0,1\}^4$.}
\end{subfigure}\hfill
\begin{subfigure}[t]{0.47\linewidth}
\centering
\resizebox{\linewidth}{!}{%
\begin{tikzpicture}[node distance=10mm]
  \node[orig, state, initial] (t0) {$q_0$};
  \node[orig, state, right=of t0] (t1) {$q_1$};
  \node[orig, state, right=of t1] (t2) {$q_2$};
  \node[orig, state, right=of t2] (t3) {$q_3$};
  \node[orig, state, right=of t3] (t4) {$q_4$};
  \node[added, state, accepting, right=of t4] (tf) {$q_{\text{fin}}$};

  \node[added, state, below=11mm of t1] (b1) {$q'_1$};
  \node[added, state, right=11mm of b1] (b2) {$q'_2$};
  \node[added, state, right=11mm of b2] (b3) {$q'_3$};
  \node[added, state, right=11mm of b3] (b4) {$q'_4$};
  \node[dead, below=10mm of tf] (bd) {\scriptsize R};

  \path[->]
    (t0) edge node[above, font=\scriptsize] {1} (t1)
    (t1) edge node[above, font=\scriptsize] {0,1} (t2)
    (t2) edge node[above, font=\scriptsize] {1} (t3)
    (t3) edge node[above, font=\scriptsize] {0,1} (t4)
    (t4) edge node[above, font=\scriptsize] {0,1} (tf);

  \path[->]
    (t0) edge[bend right=18] node[left, font=\scriptsize] {0} (b1)
    (t2) edge[bend right=18] node[left, font=\scriptsize] {0} (b3);

  \path[->]
    (b1) edge node[above, font=\scriptsize] {0,1} (b2)
    (b2) edge node[above, font=\scriptsize] {0,1} (b3)
    (b3) edge node[above, font=\scriptsize] {0,1} (b4)
    (b4) edge[bend left=10]  node[right, font=\scriptsize] {1} (tf)
    (b4) edge[bend right=14] node[right, font=\scriptsize] {0} (bd)
    (bd) edge[loop below] node[font=\scriptsize] {0,1} (bd)
    (tf) edge[bend left=14] node[right, font=\scriptsize] {0,1} (bd);
\end{tikzpicture}%
}
\caption{Transformed DFA $(z_1\wedge z_3)\vee z_5$, $x\in\{0,1\}^5$.}
\end{subfigure}

\caption{Transformation from Section~\ref{subsec:bdfa-construct}.  
In (b), green states are from the original DFA; blue states $q'_1,\ldots,q'_4$ ensure every prefix of length $<N$ has continuation labels $[1,1,0]$, and $q'_4$ routes to accept on input $1$ (to the dead state on $0$).}
\label{fig:hard-construct}
\end{figure}

\begin{lemma}[Padded Construction]\label{lem:padding-construct}
From a Boolean Acyclic DFA $A$ we can construct, in time polynomial in $|A|+N$, a DFA $A^{\oplus}$ with $L_{A^{\oplus}}\subseteq\{0,1\}^{N+1}$ and
\begin{equation}\label{eq:pad-language}
\text{for } u\in\{0,1\}^{N}\ \text{and}\ b\in\{0,1\}:\quad
u\cdot b\in L_{A^{\oplus}}
\ \Longleftrightarrow\
\big(A(u)=1\big)\,\ \text{or} \,\ \big(b=1\big).
\end{equation}
Moreover, for every prefix $y$ with $|y|<N$,
\[
\big(\dfunc(y,0),\ \dfunc(y,1),\ L_{A^{\oplus}}(y)\big)=(1,1,0),
\]
and for every $u\in\{0,1\}^{N}$,
\[
\big(\dfunc(u,0),\ \dfunc(u,1),\ L_{A^{\oplus}}(u)\big)=\big(A(u),\,1,\,0\big).
\]
The construction adds at most $N{+}1$ states.
\end{lemma}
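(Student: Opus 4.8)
The plan is to write down $A^{\oplus}$ explicitly and then read off both its language and its NSP labels, the latter being a consequence of the former. I would first assume without loss of generality that $A$ is minimal (minimization is polynomial in $|A|$), so Lemma~\ref{lem:depth} applies: each non-dead state has a unique depth $\ell(q)\in\{0,\ldots,N\}$, every transition between non-dead states increments depth, and acceptance occurs only at depth $N$, so there is a single accepting state $q_N$ (when $L_A\neq\emptyset$) and no non-accepting non-dead state at depth $N$. I then build $A^{\oplus}$ by keeping the non-dead states of $A$, adding a \emph{bypass chain} $b_1\to\cdots\to b_N$ on which both symbols advance one level, and adding a fresh accepting state $q_{\mathrm{fin}}$ together with a dead state $\deadstate$ (reusing $A$'s dead state if it has one). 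The transitions are: for non-dead $q$ at depth $\ell<N$, reroute every transition $\delta(q,\sigma)=\deadstate$ to $b_{\ell+1}$ and keep all other transitions; send both symbols from $q_N$ to $q_{\mathrm{fin}}$; from $b_N$ send $1$ to $q_{\mathrm{fin}}$ and $0$ to $\deadstate$; and send both symbols from $q_{\mathrm{fin}}$ to $\deadstate$. Since $q_{\mathrm{fin}}$ is the only accepting state and sits at depth $N+1$, we get $L_{A^{\oplus}}\subseteq\{0,1\}^{N+1}$.

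To establish \eqref{eq:pad-language} I would trace the run of $A^{\oplus}$ on $u\cdot b$ with $|u|=N$. If $A(u)=1$, the accepting run of $A$ never touches $\deadstate$, so $A^{\oplus}$ follows the identical states and reaches $q_N$ after reading $u$; then either choice of $b$ moves to the accepting $q_{\mathrm{fin}}$, so $u\cdot b$ is accepted. If $A(u)=0$, minimality forces the run of $A$ to enter $\deadstate$ at some depth $\le N$ (there is no non-accepting non-dead state at depth $N$), so $A^{\oplus}$ is diverted into the chain and, advancing one level per symbol, lands on $b_N$ after reading $u$; there $b=1$ reaches $q_{\mathrm{fin}}$ while $b=0$ goes to $\deadstate$. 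In both cases $u\cdot b\in L_{A^{\oplus}}$ iff $A(u)=1$ or $b=1$, which is \eqref{eq:pad-language}.

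Finally, the NSP labels are determined by $L_{A^{\oplus}}$ alone, so they follow from \eqref{eq:pad-language}. Membership is $0$ for every $y$ with $|y|\le N$ since $L_{A^{\oplus}}\subseteq\{0,1\}^{N+1}$. For $|y|<N$ and either symbol $\sigma$, the word $y\cdot\sigma$ extends to length $N+1$ by a suffix of length $N-|y|\ge 1$ ending in $1$, and every string ending in $1$ is accepted, giving $\dfunc(y,0)=\dfunc(y,1)=1$ and the triple $(1,1,0)$. For $|u|=N$ the only length-$(N{+}1)$ completion of $u\cdot\sigma$ is with the empty suffix, so $\dfunc(u,\sigma)=L_{A^{\oplus}}(u\cdot\sigma)$; hence $\dfunc(u,1)=1$ always and $\dfunc(u,0)=A(u)$, the triple $(A(u),1,0)$. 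The added states are the $N$ chain states and $q_{\mathrm{fin}}$ (plus a dead state, reused from $A$ when present), so $|A^{\oplus}|\le|A|+N+1$ and every step is polynomial in $|A|+N$. The main obstacle is designing the diversion so that prefixes rejected by $A$ are \emph{kept alive} through depth $N$: this is exactly what forces the uninformative $(1,1,0)$ labels at every short prefix, and is why one padding symbol together with a full length-$N$ chain is required.
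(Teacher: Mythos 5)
Your construction is essentially the paper's: the same bypass chain of $N$ fresh states that advances one level on both symbols, the same fresh accepting state $q_{\mathrm{fin}}$ reachable only at depth $N+1$, the same rerouting of dead transitions at depths $<N$ into the chain, and the same routing at depth $N$. Your simplification there (send both symbols from the unique accepting state $q_N$ to $q_{\mathrm{fin}}$) is justified by your correct structural observation that a minimal fixed-length DFA has exactly one accepting state and no non-accepting live state at depth $N$, which makes the paper's case split at depth $N$ vacuous. One genuinely nicer step in your write-up: you derive the NSP label claims purely from the language identity \eqref{eq:pad-language} together with $L_{A^{\oplus}}\subseteq\{0,1\}^{N+1}$, using the fact that $\dfunc$ depends only on the language and not on the automaton, whereas the paper re-traces transitions of the construction to read off the labels; your route is shorter and less error-prone.

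The one gap is the degenerate case $L_A=\emptyset$. After minimization the only state of $A$ is $\deadstate$, which is then also the start state; your construction ``keeps the non-dead states of $A$,'' of which there are none, so $A^{\oplus}$ either has no start state or inherits the dead one, and then no string ending in $1$ is accepted, contradicting \eqref{eq:pad-language}. The paper patches exactly this case by introducing a fresh start state $\tilde q_0$ of depth $0$ with both transitions into the first chain state; the same one-line fix completes your proof. Everything else --- the run-tracing argument for \eqref{eq:pad-language}, the label computation, and the state bound $|A^{\oplus}|\le |A|+N+1$ --- is correct.
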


\begin{proof}
Minimize the DFA $A$ if it is not minimal. Create new states $q'_1,\ldots,q'_N$ and a new accepting state $q_{\mathrm{fin}}$.  For $1\le i<N$ set
\[
\delta_{A^{\oplus}}(q'_i,0)=q'_{i+1},\qquad \delta_{A^{\oplus}}(q'_i,1)=q'_{i+1},
\]
and at $i=N$ set
\[
\delta_{A^{\oplus}}(q'_N,1)=q_{\mathrm{fin}},\qquad \delta_{A^{\oplus}}(q'_N,0)=\deadstate.
\]
From $q_{\mathrm{fin}}$ send both symbols to $\deadstate$ (so acceptance occurs only at length $N{+}1$).

Now modify $A$ to create $A^{\oplus}$ as follows.  For each non-dead state $q$ with $\ell(q)<N$ and each $\sym\in\{0,1\}$:
\begin{itemize}
\item If $\delta_{A}(q,\sym)=\deadstate$ in $A$, set $\delta_{A^{\oplus}}(q,\sym)=q'_{\ell(q)+1}$ (redirect the dead transition into the chain).
\item Otherwise set $\delta_{A^{\oplus}}(q,\sym)=\delta_{A}(q,\sym)$.
\end{itemize}
For each state $q$ at depth $N$ set
\[
\delta_{A^{\oplus}}(q,1)=q_{\mathrm{fin}},\qquad
\delta_{A^{\oplus}}(q,0)=\begin{cases}
q_{\mathrm{fin}},& q\in \final_{A},\\
\deadstate,& q\notin \final_{A}.
\end{cases}
\]

See Figure~\ref{fig:hard-construct} for a simple example construction for a Boolean Acyclic DFA computing a conjunction.

All transitions out of $\deadstate$ point to $\deadstate$, i.e., $\delta_{A^{\oplus}}(\deadstate,\sym)=\deadstate$ for both symbols.  (If $A$ recognizes the empty language, then $Q=\{\deadstate\}$; in this case we additionally introduce a fresh start state $\tilde q_0$ of depth $0$ with $\delta_{A^{\oplus}}(\tilde q_0,0)=\delta_{A^{\oplus}}(\tilde q_0,1)=q'_1$ and take $\tilde q_0$ as the start state; the conclusions below still hold.)

By construction, acceptance can occur only at $q_{\mathrm{fin}}$ after exactly $N{+}1$ symbols, so $L_{A^{\oplus}}\subseteq\{0,1\}^{N+1}$.  The rule at depth $N$ gives~\eqref{eq:pad-language}.  For any prefix $y$ with $|y|<N$, either an original transition still has an accepting continuation, or a dead transition is redirected to the chain $q'_{|y|+1}\!\to\cdots\to q'_N\!\to q_{\mathrm{fin}}$ (taking the last symbol $1$). Hence $\dfunc(y,0)=\dfunc(y,1)=1$ and $L_{A^{\oplus}}(y)=0$ for any $y$ with $|y| < N$.  At depth $N$, for every $u\in\{0,1\}^{N}$ our construction enforces
\[
\delta_{A^{\oplus}}\!\big(\,\delta_{A^{\oplus}}(q_0,u),1\big)=q_{\mathrm{fin}}\quad\text{and}\quad
\delta_{A^{\oplus}}\!\big(\,\delta_{A^{\oplus}}(q_0,u),0\big)=
\begin{cases}
q_{\mathrm{fin}},& \delta_{A}(q_0,u)\in \final_{A},\\[2pt]
\deadstate,& \delta_{A}(q_0,u)\notin \final_{A},
\end{cases}
\]
so $\dfunc(u,1)=1$, $\dfunc(u,0)=A(u)$, and $L_{A^{\oplus}}(u)=0$, which is exactly
\[
\big(\dfunc(u,0),\ \dfunc(u,1),\ L_{A^{\oplus}}(u)\big)=\big(A(u),\,1,\,0\big).
\]
\end{proof}

\textbf{Reduction from standard learning to $\nsp$ learning.}
Let $D$ be any distribution on $\{0,1\}^{N}$.  Define the \emph{padded} distribution $D^{\oplus}$ on $\{0,1\}^{N+1}$ by sampling $u\sim D$ and returning $x:=u\cdot 1$.  By \eqref{eq:pad-language}, $x\in L_{A^{\oplus}}$ for every $u$, so $D^{\oplus}$ is supported on positive examples as required by the $\nsp$ setting.  Moreover, given a labeled standard example $(u,y)$ with $y=A(u)$, the full $\nsp$ label vector $f_{A^{\oplus}}(x)$ for $x=u\cdot 1$ is computable from $(u,y)$:
\begin{align*}
&\text{for } 0\le \ell<N:\quad \big(\dfunc(\prefl{\ell},0),\dfunc(\prefl{\ell},1),L(\prefl{\ell})\big)=(1,1,0),\\
&\text{for } \ell=N:\quad \big(\dfunc(\prefl{N},0),\dfunc(\prefl{N},1),L(\prefl{N})\big)=(y,1,0),\\
&\text{for } \ell=N{+}1:\quad \big(\dfunc(\prefl{N+1},0),\dfunc(\prefl{N+1},1),L(\prefl{N+1})\big)=(0,0,1).
\end{align*}
Here $\prefl{\ell}$ denotes the length-$\ell$ prefix of the padded string $x$.

\begin{theorem}\label{th:NSPtoSTD}
Fix $N$ and a polynomial $p(\cdot)$. If $\mathrm{ADFA}^{N}_{p(N)}$ is efficiently PAC-learnable in the $\nsp$ setting from positive examples, then $\mathrm{ADFA}^{N}_{p(N)}$ is efficiently PAC-learnable in the conventional classification (binary-label) setting.    
\end{theorem}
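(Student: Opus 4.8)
The plan is to give a black-box \emph{reduction}: assume an efficient NSP learner $\calA_{\nsp}$ for the padded class and use it to build an efficient standard learner $\calA_{\mathrm{std}}$ for $\mathrm{ADFA}^{N}_{p(N)}$. Fix a standard target $A\in\mathrm{ADFA}^{N}_{p(N)}$ and any distribution $D$ on $\{0,1\}^{N}$; $\calA_{\mathrm{std}}$ draws labeled examples $(u,A(u))$. For each, I form the padded string $x:=u\cdot 1$ and synthesize the full NSP label vector $f_{A^{\oplus}}(x)$ using exactly the three per-prefix formulas displayed before the theorem. The essential point is that this synthesis consumes only the single bit $y=A(u)$ and never the automaton $A$ itself: prefixes of length $<N$ always carry $(1,1,0)$, the length-$N$ prefix carries $(y,1,0)$, and the length-$(N{+}1)$ prefix carries $(0,0,1)$. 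Feeding the pairs $(x,f_{A^{\oplus}}(x))$ to $\calA_{\nsp}$ presents it, by Lemma~\ref{lem:padding-construct}, with NSP examples of the target $A^{\oplus}$ under the pushforward distribution $D^{\oplus}$; this is supported on positive examples as the NSP setting demands, since $u\cdot 1\in L_{A^{\oplus}}$ for every $u$ by~\eqref{eq:pad-language}.

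When $\calA_{\nsp}$ returns a hypothesis $\fhat$ with $\errnsp(\fhat;f_{A^{\oplus}},D^{\oplus})\le\epsilon$, I read off a binary classifier by setting $\hat g(u)$ to be the coordinate of $\fhat(u\cdot 1)$ that encodes $\dfunc(\prefl{N},0)$, i.e.\ the symbol-$0$ continuation bit at the length-$N$ prefix. The crux of the argument --- and the reason the stringent NSP loss helps rather than hurts here --- is that $\err$ charges an example unless \emph{all} its labels are correct; hence whenever $\fhat(u\cdot 1)=f_{A^{\oplus}}(u\cdot 1)$, that one distinguished coordinate is in particular correct and equals $A(u)$ by Lemma~\ref{lem:padding-construct}. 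Contrapositively, $\hat g(u)\neq A(u)$ forces a nonzero NSP error on $u\cdot 1$, so
\[
\Pr_{u\sim D}\big[\hat g(u)\neq A(u)\big]\ \le\ \Pr_{x\sim D^{\oplus}}\big[\fhat(x)\neq f_{A^{\oplus}}(x)\big]\ =\ \errnsp(\fhat;f_{A^{\oplus}},D^{\oplus})\ \le\ \epsilon ,
\]
where the first step uses that $u\mapsto u\cdot 1$ pushes $D$ forward to $D^{\oplus}$. Thus any target classification error $\epsilon$ is met by requesting NSP error $\epsilon$, and the confidence $1-\delta$ carries over verbatim.

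For efficiency, transforming each example costs $O(N)$, evaluating $\hat g$ costs one evaluation of the polynomially-evaluable hypothesis $\fhat$, and $\calA_{\nsp}$ runs in polynomial time and sample size; so $\calA_{\mathrm{std}}$ is efficient. The step I expect to require the most care is not the inequality above (which is immediate once Lemma~\ref{lem:padding-construct} is in hand) but the \emph{class bookkeeping}: the padded automaton has length-$(N{+}1)$ strings and at most $|A|+N+1\le p(N)+N+1$ states, so it lies in $\mathrm{ADFA}^{N+1}_{q(N+1)}$ with $q(N+1)=p(N)+N+1$ rather than literally in $\mathrm{ADFA}^{N}_{p(N)}$. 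I therefore invoke the NSP-learnability hypothesis at length $N{+}1$ and polynomial $q$; since both the length shift and the additive $N+1$ states are polynomial in $N$, the target $A^{\oplus}$ stays inside an efficiently NSP-learnable family and the conclusion is unaffected (at the family level $\mathrm{ADFA}_{p(\cdot)}$ one simply picks $p$ large enough to absorb the padding). Chaining this efficient standard learner with Theorem~\ref{th:kearns} is then what delivers the advertised cryptographic hardness.
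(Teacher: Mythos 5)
Your proposal is correct and follows essentially the same route as the paper's proof: pad each example to $u\cdot 1$, synthesize the NSP labels from the single bit $y=A(u)$, run the NSP learner on the pushforward distribution $D^{\oplus}$, read off the symbol-$0$ continuation bit at the length-$N$ prefix, and bound the classification error by the NSP error since the all-or-nothing loss dominates any single coordinate. Your explicit bookkeeping that $A^{\oplus}$ lives in $\mathrm{ADFA}^{N+1}_{p(N)+N+1}$ rather than literally in $\mathrm{ADFA}^{N}_{p(N)}$ is in fact slightly more careful than the paper, which only remarks that the state complexity is at most $p(N)+N+1$.
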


\begin{proof}
Let $\A_{\nsp}$ be an efficient $\nsp$ learner for $\mathrm{ADFA}^{N}_{p(N)}$.  From i.i.d.\ labeled samples $(u^{(i)},y^{(i)})$ with $y^{(i)}=A(u^{(i)})$, form positive $\nsp$ examples $\big(x^{(i)}, f_{A^{\oplus}}(x^{(i)})\big)$ where $x^{(i)}=u^{(i)}\!\cdot 1$ using the rule above, and feed them to $\A_{\nsp}$.  Let $\fhat$ be the returned predictor so that, with probability at least $1-\delta$,
\[
\errnsp\big(\fhat;\, f_{A^{\oplus}},\, D^{\oplus}\big)
=\E_{u\sim D}\Big[\mathbb{I} \big[\fhat(u\!\cdot 1)\neq f_{A^{\oplus}}(u\!\cdot 1)\big ]\Big]\ \le\ \epsilon.
\]
Define a standard classifier $h:\{0,1\}^{N}\to\{0,1\}$ by
\[
h(u)\ :=\ \text{the bit predicted by }\fhat\text{ for }\dfunc(\prefl{N},0)\text{ on }x=u\!\cdot 1.
\]
Whenever $\mathbb{I}\big [\fhat(x)=f_{A^{\oplus}}(x)\big ]=1$, Lemma~\ref{lem:padding-construct} gives $h(u)=A(u)$.  Therefore
\[
\Pr_{u\sim D}\big[h(u)\neq A(u)\big]\ \le\ \Pr_{u\sim D}\Big[\fhat(u\!\cdot 1)\neq f_{A^{\oplus}}(u\!\cdot 1)\Big]\ \le\ \epsilon,
\]
yielding an efficient PAC learner in the standard setting.  The state complexity of $A^{\oplus}$ is at most $p(N)+N+1$, preserving polynomial time complexity.
\end{proof}

\begin{corollary}[Cryptographic hardness for $\nsp$ learning of fixed-length acyclic DFAs]\label{cor:nsphard-bdfa}
Under the assumptions of Theorem~\ref{th:kearns}, there is no polynomial-time weak learner for $\mathrm{ADFA}_{p(\cdot)}$ in the $\nsp$ setting.  Otherwise, Theorem~\ref{th:NSPtoSTD} would yield a polynomial-time weak learner in the standard setting, contradicting Theorem~\ref{th:kearns}.
\end{corollary}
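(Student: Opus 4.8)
The plan is a contrapositive argument that plugs the error-preserving reduction of Theorem~\ref{th:NSPtoSTD} into the cryptographic lower bound of Theorem~\ref{th:kearns}. Suppose toward a contradiction that there is a polynomial-time algorithm $\A_{\nsp}$ that weakly learns $\mathrm{ADFA}_{p(\cdot)}$ in the $\nsp$ setting: for every target, every $N$, and every distribution on positive examples it outputs $\fhat$ with $\errnsp(\fhat)\le\tfrac12-1/q(N)$ for some fixed polynomial $q$, with probability at least $1-\delta$. I would then run $\A_{\nsp}$ through the reduction of Theorem~\ref{th:NSPtoSTD}: convert standard labeled samples $(u,y)$ with $y=A(u)$ into padded positive $\nsp$ examples for $A^{\oplus}$ (using $x=u\cdot1$), feed them to $\A_{\nsp}$, and read off the classifier $h(u)$ from the $\dfunc(\prefl{N},0)$ coordinate of $\fhat(u\cdot1)$.

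The crux is that this reduction is error-monotone and uniform in $N$. By Lemma~\ref{lem:padding-construct}, every $\nsp$ label of $A^{\oplus}$ except the single coordinate $\dfunc(\prefl{N},0)=A(u)$ is a fixed constant determined by the prefix length and the positive-example structure, so $h(u)=A(u)$ whenever $\fhat$ matches the entire label vector. Consequently $\{h(u)\neq A(u)\}\subseteq\{\fhat(u\cdot1)\neq f_{A^{\oplus}}(u\cdot1)\}$, and therefore
\[
\Pr_{u\sim D}[\,h(u)\neq A(u)\,]\ \le\ \errnsp(\fhat)\ \le\ \tfrac12-1/q(N).
\]
Thus $h$ is a standard weak learner for $\mathrm{ADFA}^{N}_{p(N)}$ with advantage $1/q(N)$. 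Since $A^{\oplus}$ has at most $p(N)+N+1$ states and the sample-to-example conversion is computable in polynomial time, $h$ runs in polynomial time; taking the union over $N$ yields a polynomial-time weak learner for all of $\mathrm{ADFA}_{p(\cdot)}$. This contradicts Theorem~\ref{th:kearns} under the stated cryptographic assumptions, completing the argument.

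The step I expect to require the most care is the weak-learning bookkeeping rather than the reduction mechanics. Theorem~\ref{th:NSPtoSTD} is phrased for strong PAC learning, so I would make explicit that its proof relies only on the monotone inequality above, which holds at any target error level, including the $\tfrac12-1/\mathrm{poly}$ regime. I would also flag the baseline subtlety noted in the Remark: a random predictor has near-$1$ error in the $\nsp$ setting but error $\tfrac12$ in binary classification, so one must fix the convention that ``weak $\nsp$ learning'' means $\nsp$ error below $\tfrac12$. The argument then goes through precisely because the inequality bounds the \emph{binary-classification} error by the $\nsp$ error, and it is the $\tfrac12$ baseline that Theorem~\ref{th:kearns} constrains.
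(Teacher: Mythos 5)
Your proposal is correct and takes essentially the same route as the paper, whose proof of the corollary is exactly this contrapositive: feed the padded examples from the reduction of Theorem~\ref{th:NSPtoSTD} to the assumed weak $\nsp$ learner, read off the depth-$N$ continuation bit, and use the error-monotone inclusion $\{h(u)\neq A(u)\}\subseteq\{\fhat(u\cdot 1)\neq f_{A^{\oplus}}(u\cdot 1)\}$ to obtain a weak learner in the standard setting, contradicting Theorem~\ref{th:kearns}. Your explicit bookkeeping---that the reduction preserves error at \emph{any} level (so the strong-PAC phrasing of Theorem~\ref{th:NSPtoSTD} transfers to the $\tfrac12-1/\mathrm{poly}$ regime), and that ``weak'' must be measured against the $\tfrac12$ baseline of the binary setting rather than the near-$1$ baseline of $\nsp$---is left implicit in the paper and is a worthwhile clarification, not a deviation.
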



\textbf{Boolean formulas.} Exactly the same padding idea applies to Boolean formulas.  Let $F$ be any class of formulas $f:\{0,1\}^{N}\to\{0,1\}$ and define
\[
f'(z_1,\ldots,z_{N+1})\ :=\ f(z_1,\ldots,z_N)\ \vee\ z_{N+1}.
\]
Given a labeled standard example $(u,y)$ with $y=f(u)$, set $x:=u\cdot 1$.  The $\nsp$ labels for the positive string $x$ under $f'$ are then computable from $(u,y)$:
\[
\text{for }\ell<N:\ (1,1,0),\qquad \text{for }\ell=N:\ (y,1,0),\qquad \text{for }\ell=N{+}1:\ (0,0,1),
\]
with the same ordering (continuations first, then membership) as in \S\ref{sec:def}.  Thus, an efficient $\nsp$ learner for $F':=\{f':f\in F\}$ yields, by reading the depth-$N$ continuation bit for symbol $0$, an efficient PAC learner for $F$ in the standard setting.  In particular, cryptographic hardness results for learning formulas (e.g., via $\mathrm{NC}^1$) carry over to the $\nsp$ setting by this reduction.

\subsection{Discussion} 

The result shows that richer supervision via $\nsp$ does not circumvent the computational barrier for learning regular languages: under standard assumptions, there is no polynomial-time weak learner for $\BDFA_{p(\cdot)}$ and hence for DFAs even in the $\nsp$ setting. This remains true when the learner receives both positive and negative examples with $\nsp$ labels, showing that such additional supervision does not mitigate these barriers. The hardness is \emph{improper}: it rules out efficient learning even when the hypothesis need not be a DFA, and thus applies to neural models trained to match $\nsp$ labels.

\bibliographystyle{plainnat}
\bibliography{citations}

\end{document}